\documentclass{article}

\usepackage{hyperref}

\usepackage[accepted]{icml2026}

\usepackage[utf8]{inputenc} 
\usepackage[T1]{fontenc}    
\usepackage{url}            
\usepackage{booktabs}       
\usepackage{amsfonts}       
\usepackage{amsmath}
\usepackage{amssymb} 
\usepackage{amsthm}
\usepackage{stmaryrd}
\usepackage{nicefrac}       
\usepackage{microtype}      
\usepackage[dvipsnames]{xcolor}         
\usepackage{algorithm}
\usepackage{listings}
\usepackage{mathtools}
\usepackage{multirow}
\usepackage{multicol}
\usepackage{tcolorbox}
\usepackage{dsfont}
\usepackage{xspace}
\usepackage{blkarray}
\usepackage{pgfplots}
\pgfplotsset{compat=1.18}
\usepackage{pgfplotstable}
\usepackage[inline]{enumitem}

\usepackage{tikz}
\usetikzlibrary{automata, positioning, arrows.meta}

\usepackage[capitalize,noabbrev]{cleveref}

\theoremstyle{plain}
\newtheorem{theorem}{Theorem}[section]

\theoremstyle{definition}
\newtheorem{definition}[theorem]{Definition}

\theoremstyle{remark}

\newtheorem{example}[theorem]{Example}

\usepackage[textsize=tiny]{todonotes}
\usepackage{subcaption}

\makeatletter
\newtheorem*{rep@theorem}{\rep@title}
\newcommand{\newreptheorem}[2]{%
\newenvironment{rep#1}[1]{%
 \def\rep@title{#2 \ref{##1}}%
 \begin{rep@theorem}}%
 {\end{rep@theorem}}}
\makeatother

\newcommand{\e}{\mathds{E}}
\newcommand{\tok}{\mathit{tok}}

\newcommand{\start}{\mathit{q_0}}

\newcommand{\dhat}{\hat{\delta}}

\newcommand{\tcurr}{x_{\mathit{t}}}

\newcommand{\alignr}{\mathsf{Align}}

\newcommand{\dec}{\operatorname{Dec}}
\newcommand{\itrp}{\mathsf{Dec}}

\newreptheorem{theorem}{Theorem}
\newreptheorem{lemma}{Lemma}

\newcommand{\rone}{(\emph{i})~}
\newcommand{\rtwo}{(\emph{ii})~}
\newcommand{\rthree}{(\emph{iii})~}
\newcommand{\rfour}{(\emph{iv})~}
\newcommand{\rfive}{(\emph{v})~}

\newcommand{\lbbar}{\{\kern-0.5ex|}
\newcommand{\rbbar}{|\kern-0.5ex\}}

\newcommand{\biglbbar}{\left\{\kern-0.5ex\left|}
\newcommand{\bigrbbar}{\right|\kern-0.5ex\right\}}

\newcommand{\name}{\textsc{Diffinity}\xspace}

\newcommand{\aL}{\mathcal{L}}

\newcommand{\Omit}[1]{}

\newcounter{resq}

\newcommand{\set}[1]{\{{#1}\}}

\definecolor{dgreen}{RGB}{0,128,0}
\definecolor{dred}{RGB}{200,0,0}

\icmltitlerunning{Continuous Diffusion Models Can Obey Formal Syntax}

\begin{document}

\twocolumn[
  \icmltitle{Continuous Diffusion Models Can Obey Formal Syntax}



  \icmlsetsymbol{equal}{*}

  \begin{icmlauthorlist}
    \icmlauthor{Jinwoo Kim}{ucsd}
    \icmlauthor{Taylor Berg-Kirkpatrick}{ucsd}
    \icmlauthor{Loris D'Antoni}{ucsd}

  \end{icmlauthorlist}

  \icmlaffiliation{ucsd}{Department of Computer Science and Engineering, University of California-San Diego, San Diego, USA}

  \icmlcorrespondingauthor{Jinwoo Kim}{pl@ucsd.edu}

  \icmlkeywords{Machine Learning, ICML}

  \vskip 0.3in
]



\printAffiliationsAndNotice{}  


\begin{abstract}
Diffusion language models offer a promising alternative to autoregressive models due to their global, non-causal generation process, but their continuous latent dynamics make discrete constraints---e.g., the output should be a JSON file that matches a given schema---difficult to impose.
We introduce a training-free guidance method for steering continuous diffusion language models to satisfy formal syntactic constraints expressed using regular expressions.
Our approach constructs an analytic score estimating the probability that a latent state decodes to a valid string accepted by a given regular expression, and uses its gradient to guide sampling, \emph{without} training auxiliary classifiers.
The denoising process targets the base model conditioned on syntactic validity.
We implement our method in \name on top of the PLAID diffusion model and evaluate it on 180 regular-expression constraints over JSON and natural-language benchmarks.
\name achieves 68-96\% constraint satisfaction while incurring only a
small perplexity cost relative to unconstrained sampling, 
outperforming autoregressive constrained decoding in both constraint satisfaction and output quality.
\name is open-sourced at 
\url{github.com/large-loris-models/Diffinity}.
\end{abstract}

\section{Introduction}
\label{sec:introduction}

    Diffusion-based language models are an increasingly attractive alternative to autoregressive generation,
particularly in settings where bidirectional context or global revision is useful (e.g., infilling and long-horizon planning)~\cite{plaid, ladir, sedd, hypersphere, llada, dream}.
Because a diffusion model denoises an entire sequence jointly across timesteps, it can in principle model
non-causal dependencies and enforce global structure in ways that left-to-right autoregressive decoding cannot.
This aspect makes diffusion a natural candidate for \emph{structured generation}, where outputs must satisfy discrete syntactic requirements---such as JSON schema, mathematical formats, or regular expressions.

\begin{tcolorbox}[
  colback=MidnightBlue!3,
  colframe=MidnightBlue!60,
  boxrule=0pt,
  arc=4pt,
  left=6pt,right=6pt,top=5pt,bottom=5pt,
  fonttitle=\bfseries,
  coltitle=MidnightBlue!80!black
]
\noindent We show that \emph{continuous} diffusion models can be steered to satisfy formal syntax (regular expressions), using a training-free guidance mechanism.
\end{tcolorbox}

\paragraph{Why is this problem challenging?}
In the autoregressive models, syntactic validity is typically enforced via \emph{constrained decoding}, which filters next-token choices to ensure the final output is well formed~\cite{gcd}. However, a continuous diffusion model never maintains an explicit discrete prefix or in general a plain text representation of the output sequence: each timestep operates on a continuous latent representation of the \emph{entire} sequence.
As a result, the core primitive behind constrained decoding---masking next-token choices that would make the current prefix 
uncompletable---does not transfer directly to continuous diffusion.
Existing constrained-decoding adaptations for diffusion~\cite{dingo, vechevdiff} therefore target specialized \emph{discrete} diffusion variants that mimic autoregressive unmasking transitions~\cite{llada, dream} (though they do not require unmasking to happen left-to-right).
Other approaches for steering diffusion models to satisfy constraints use classifier guidance~\cite{classifier_guidance,songscore}, which requires training an auxiliary model for each constraint.
Because regular expressions induce nonlocal, discrete constraints, a classifier-free method for steering \emph{continuous} diffusion models to obey formal syntax has remained out of reach.

\paragraph{Our solution.}
We address this gap by introducing a \emph{training-free} guidance mechanism for continuous diffusion language models that targets \emph{discrete} syntactic constraints.
At a high level, our method mirrors classifier guidance~\cite{classifier_guidance}: rather than modifying the base diffusion model, we add a guidance term during sampling that pushes trajectories toward regions of latent space that decode to syntactically valid text.
The key difference is that we do not train an auxiliary classifier; instead, we analytically compute 
the expected probability
that a given latent state will decode to a sequence satisfying a target regular expression.
Using the gradient of this probability 
to steer sampling yields two practical benefits:
\begin{enumerate*}[label=(\arabic*)]
  \item \textbf{Training-free guidance:} The guidance score is computed analytically from the constraint, so applying a new regex does not require training (or storing) an additional classifier.
  \item \textbf{Principled conditioning:} The resulting guidance 
  approximates
  conditioning the diffusion sampler on syntactic validity 
  rather than heuristically biasing token choices.
\end{enumerate*}

We instantiate our framework for regular expressions (regexes), which capture a wide range of practical syntactic constraints and are substantially more expressive than the constraints supported by prior non-classifier diffusion steering methods~\cite{plaid}.
Moreover, our guidance targets the \emph{conditional distribution} of the base model given syntactic validity, enabling \emph{constraint-aligned generation} in which samples remain faithful to the model rather than being distorted by ad hoc decoding heuristics.
This stands in contrast to autoregressive constrained decoding, where enforcing validity can be at odds with preserving the original model distribution~\cite{park2024grammaraligned,cars,gonzalez2025}.

\paragraph{Results.}
We implement our method in \name and apply it to PLAID, a 
a state-of-the-art \textit{continous} diffusion language model~\cite{plaid} in terms of performance.\footnote{
  While there exist other diffusion models that display better performance~\cite{llada,dream}, these 
  are unmasking based discrete diffusion models and do not directly fit our framework.
}
Across structured-generation benchmarks (including regular versions of JSONSchemaBench~\cite{jsonschemabench} and natural-language regex constraints), \name achieves high constraint satisfaction---up to 70\% on complex JSON-schema regexes---while incurring only a small perplexity cost relative to unconstrained sampling.
In addition, \name is more reliable in practice than existing autoregressive constrained-decoding toolkits~\cite{guidance, outlines}, which can fail under finite token budgets (\texttt{max\_tokens}) by stalling in low-probability parts of the automaton; we observe improvements in both validity and text quality against autoregressive models of comparable capacity.

\paragraph{Contributions.}
We make three contributions:
\begin{enumerate}
  \item We propose a training-free guidance method that steers \emph{continuous} diffusion language models to satisfy \emph{discrete} regex constraints (\S\ref{sec:overview}).
  \item We provide a theoretical justification showing that the resulting sampler targets the base model conditioned on syntactic validity (\S\ref{sec:math}).
  \item We implement \name and demonstrate strong validity--quality trade-offs on structured-generation benchmarks, 
  outperforming autoregressive constrained decoding in both reliability and text quality under finite token budgets (\S\ref{sec:evaluation}).
  \name is open-sourced at \url{github.com/large-loris-models/Diffinity}.
\end{enumerate}

\section{Steering Diffusion Models to Satisfy Regular Constraints}
\label{sec:overview}

We now describe at a high level our approach to steering diffusion models 
towards generating outputs that satisfy the syntactic constraint imposed by a given regular expression.

\subsection{A Crash Course on Diffusion Models for Text}
\label{sec:crash-course}

Diffusion models---more precisely, denoising diffusion probabilistic models (DDPMs)~\citep{ddpm}---generate data by reversing a gradual noise-addition process. We denote the data as $x$, where $x_T$ is pure Gaussian noise and $x_0$ is the ``clean'' data. Critically, in \textit{continuous} text diffusion models like PLAID~\citep{plaid}, $x_0$ is not discrete text but a continuous \emph{embedding} of a text sequence---specifically, a matrix of embedding vectors, one for each token position. Because $x_0$ resides in a continuous space, we can model the generation process via Gaussian denoising. The ``reverse'' denoising step at each timestep $t$ is defined by sampling $x_{t-1}$ from a Gaussian distribution parameterized by the model:
\begin{equation}
    x_{t-1} = \mu_\theta(x_t, t) + \sigma_t \epsilon
    \label{eq:diff-standard}
\end{equation}

In \Cref{eq:diff-standard}, $x_t$ is the current noisy latent embedding, $x_{t-1}$ is the less-noisy latent to be computed, $\epsilon$ is noise sampled from the standard normal distribution $\mathcal{N}(0, I)$, and $\sigma_t$ is a fixed variance schedule. The term $\mu_\theta(x_t, t)$ represents the predicted mean of the posterior distribution given the current latent $x_t$. In practice, the model uses a neural network to predict the clean embedding $x_0$ from $x_t$, and $\mu_\theta$ is computed analytically as a weighted linear combination of $x_t$ and this predicted $x_0$.

Since the denoised output $x_0$ is still a continuous latent representation, obtaining discrete text requires a decoder $\operatorname{Dec}$. This decoder maps the latent embedding $x_0$ to a sequence of probability distributions over the vocabulary---a matrix of unigram distributions, where the distribution at each position is independent of the others. The final output sentence $s$ is obtained by sampling from $\operatorname{Dec}(x_0)$, where each token in $s$ is sampled independently from the corresponding unigram distribution at its position. Following previous work, we rely on this interpretation of the latent space as a decoder-induced unigram distribution to derive the expected probabilities discussed in the following sections.

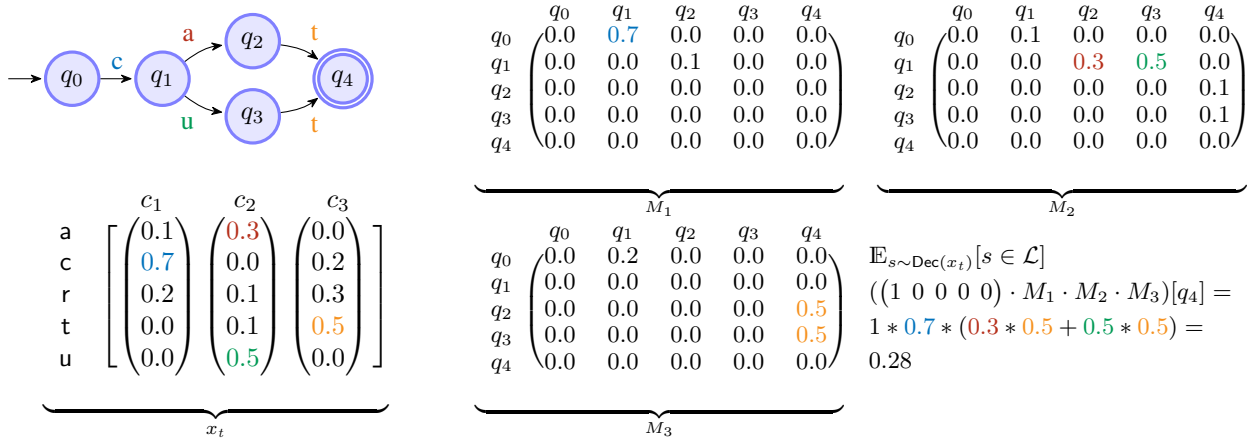
\begin{figure*}
\begin{minipage}{0.35\textwidth}
\vspace{6mm}
\begin{tikzpicture}[
    shorten >=1pt,
    node distance=1.2cm,
    on grid,
    >={Stealth[round]},
    initial text=, 
    every state/.style={draw=blue!50, very thick, fill=blue!10, minimum size=1.5em}
]

  \node[state, initial] (q0) {$q_0$};
  \node[state] (q1) [right=of q0] {$q_1$};
  
  \node[state] (q2a) [above right=0.5cm and 1.2cm of q1] {$q_2$};
  \node[state] (q2u) [below right=0.5cm and 1.2cm of q1] {$q_3$};
  
  \node[state, accepting] (q3) [right=of q1, xshift=1.2cm] {$q_4$};

  \path[->]
    (q0)  edge node[above] {{\color{RoyalBlue}c}} (q1)
    (q1)  edge [bend left=15]  node[above left] {{\color{BrickRed}a}} (q2a)
    (q1)  edge [bend right=15] node[below left] {{\color{ForestGreen}u}} (q2u)
    (q2a) edge [bend left=15]  node[above right] {{\color{BurntOrange}t}} (q3)
    (q2u) edge [bend right=15] node[below right] {{\color{BurntOrange}t}} (q3);

\end{tikzpicture}
\vspace{1.5mm}
\[
\underbrace{
\begin{blockarray}{r ccc}
  & c_1 & c_2 & c_3 \\
  \begin{block}{r [c @{\hspace{2pt}} c @{\hspace{2pt}} c]}
    \begin{matrix} {\mathsf{a}} \\ {\mathsf{c}} \\ {\mathsf{r}} \\ {\mathsf{t}} \\ {\mathsf{u}} \end{matrix} & 
    \left( \begin{matrix} 0.1 \\ {\color{RoyalBlue} 0.7} \\ 0.2 \\ 0.0 \\ 0.0 \end{matrix} \right) &
    \left( \begin{matrix} {\color{BrickRed} 0.3} \\ 0.0 \\ 0.1 \\ 0.1 \\ {\color{ForestGreen} 0.5} \end{matrix} \right) &
    \left( \begin{matrix} 0.0 \\ 0.2 \\ 0.3 \\ {\color{BurntOrange} 0.5} \\ 0.0 \end{matrix} \right) \\
  \end{block}
\end{blockarray}
}_{\tcurr}
\]
\end{minipage}
\hspace{2mm}
\begin{minipage}{0.5\textwidth}
{\footnotesize
\setlength{\arraycolsep}{2pt} 
\[
\begin{array}{cc}
\underbrace{
\begin{blockarray}{cccccc}
 & q_0 & q_1 & q_2 & q_3 & q_4 \\
\begin{block}{c(ccccc)}
  q_0 & 0.0 & {\color{RoyalBlue} 0.7} & 0.0 & 0.0 & 0.0 \\
  q_1 & 0.0 & 0.0 & 0.1 & 0.0 & 0.0 \\
  q_2 & 0.0 & 0.0 & 0.0 & 0.0 & 0.0 \\
  q_3 & 0.0 & 0.0 & 0.0 & 0.0 & 0.0 \\
  q_4 & 0.0 & 0.0 & 0.0 & 0.0 & 0.0 \\
\end{block}
\end{blockarray}}_{{\large M_1}}
& \quad
\underbrace{
\begin{blockarray}{cccccc}
 & q_0 & q_1 & q_2 & q_3 & q_4 \\
\begin{block}{c(ccccc)}
  q_0 & 0.0 & 0.1 & 0.0 & 0.0 & 0.0 \\
  q_1 & 0.0 & 0.0 & {\color{BrickRed} 0.3} & {\color{ForestGreen} 0.5} & 0.0 \\
  q_2 & 0.0 & 0.0 & 0.0 & 0.0 & 0.1 \\
  q_3 & 0.0 & 0.0 & 0.0 & 0.0 & 0.1 \\
  q_4 & 0.0 & 0.0 & 0.0 & 0.0 & 0.0 \\
\end{block}
\end{blockarray}}_{M_2} \\
\underbrace{
\begin{blockarray}{cccccc}
 & q_0 & q_1 & q_2 & q_3 & q_4 \\
\begin{block}{c(ccccc)}
  q_0 & 0.0 & 0.2 & 0.0 & 0.0 & 0.0 \\
  q_1 & 0.0 & 0.0 & 0.0 & 0.0 & 0.0 \\
  q_2 & 0.0 & 0.0 & 0.0 & 0.0 & {\color{BurntOrange} 0.5} \\
  q_3 & 0.0 & 0.0 & 0.0 & 0.0 & {\color{BurntOrange} 0.5} \\
  q_4 & 0.0 & 0.0 & 0.0 & 0.0 & 0.0 \\
\end{block}
\end{blockarray}
}_{M_3}
&
{\small
\begin{split}
& \e_{s \sim \dec(\tcurr)} [s \in \mathcal{L}] \\
& (\begin{pmatrix}1 & 0 & 0 & 0 & 0\end{pmatrix} \cdot M_1 \cdot M_2 \cdot M_3) [q_4] = \\
& 1 * {\color{RoyalBlue} 0.7} * 
  ({\color{BrickRed} 0.3} * {\color{BurntOrange} 0.5} + 
   {\color{ForestGreen}0.5} * {\color{BurntOrange} 0.5}) = \\
& 0.28
\end{split}
}
\end{array}
\]
}

\end{minipage}
\caption{The automaton for the regex \texttt{c(a|u)t} describing the regular constraint $\mathcal{L}=\{\texttt{cat},\texttt{cut}\}$,
  the unigram distribution $\dec(\tcurr)$ defined by the current latent $\tcurr$ 
  on a vocabulary
  $\set{\texttt{a, c, r, t, u}}$ of size 5, 
  and transition matrices for this automaton and latent space for a sequence of length 3.
  Valid transitions inside the automaton and their probabilities are color-coded.
}
\label{fig:regex_example}
\end{figure*}

\subsection{Denoising to Satisfy Regular Constraints}
\label{subse:construction-example}

Our goal is to steer the denoising process defined by Equation~\ref{eq:diff-standard} so that the generated output sentence $s$ belongs to a regular language $\mathcal{L}$.

Previous work has, to some extent, succeeded in guiding the diffusion process towards simple constraints such as ``The sentence should start with the word `Apple','' by simply maximizing the probability that the token ``Apple'' occurs at the start of the sequence~\cite{plaid}. However, these existing approaches fail to extend to more complex structural constraints such as regexes. The standard approach to provide more complex structural guidance in other modalities is through \textit{classifier guidance} \cite{classifier_guidance}. This approach steers the denoising process with input gradients from a pretrained control target classifier. However, training a classifier is often resource-heavy and difficult for syntactic constraints---e.g., a new classifier would need to be trained for each new regex.

The key idea in this paper is that, for syntactic constraints, we can construct an analogous steering mechanism via dynamic programming 
\textit{without ever training a classifier}.
More specifically, 
  given the current latent $x_t$ and timestep $t$,
  the decoder $\dec$ in
  most modern diffusion models~\cite{llada,hypersphere,sedd,plaid} 
  are trained to emit the expectation of the \emph{final} denoised output $\hat{x_0}$
  as a unigram distribution $\dec(x_t)$ over the vocabulary.
  Then from the unigram distribution $\dec(x_t)$,
we can analytically compute the probability that a sentence $s$ sampled from this distribution satisfies $\mathcal{L}$. 
This expectation becomes our ``classifier'' score, and we show how to compute its gradient.

Specifically, our modified steering procedure uses the following guided denoising step:
{
\begin{equation}
    x_{t-1} = \mu_\theta(x_t, t) + \gamma \sigma_t^2 \nabla_{x_t} \log \mathbb{E}_{s \sim \operatorname{Dec}(x_t)} [s \in \mathcal{L}] + \sigma_t \epsilon
    \label{eq:diff-gradient}
\end{equation}
}
In this equation, $\mu_\theta(x_t, t)$ represents the original model's predicted mean (the unconditioned update). 
The guidance term is defined by $\nabla_{x_t} \log \mathbb{E}_{s \sim \operatorname{Dec}(x_t)} [s \in \mathcal{L}]$, 
which is the gradient of the log-expected probability that a sentence $s$ sampled from the unigram distribution 
$\operatorname{Dec}(x_t) = \hat{x_0}$ satisfies the constraint $\mathcal{L}$, taken with respect to $x_t$. 
This gradient is scaled by $\sigma_t^2$ (the variance) to ensure the guidance signal is proportional to the noise level, and by $\gamma$, a guidance scale hyperparameter that controls the strength of the constraint enforcement.

Intuitively, Equation~\ref{eq:diff-gradient} shifts the center of the sampling distribution at each step toward regions of the continuous latent space that 
are expected to
decode into valid sentences in $\mathcal{L}$. Maximizing this expected probability biases the reverse process to ultimately result in a valid discrete output $s$, while the $\sigma_t \epsilon$ term maintains the necessary stochasticity for high-quality generation. This formulation admits a natural interpretation through the lens of Langevin dynamics \cite{songscore}, where the guidance term acts as an external potential. Specifically, the gradient of the expected probability exerts a force that steers the latent trajectory toward the manifold of embeddings that decode to strings in $\mathcal{L}$.

\subsection{Computing the Expected Probability}
To compute $\e_{s \sim \dec(x_t)}[s \in \mathcal{L}]$, 
we take advantage of the fact that any regular language $\mathcal{L}$ 
can be represented as a deterministic finite automaton (DFA).
Our approach uses the DFA representing the regular constraint to 
build a series of 
\emph{weighted transition matrices} $M_1, \cdots, M_n$ such that 
$M_k[i][j]$ represents the probability that 
the automaton will transition from state $i$ to state $j$ 
by consuming the $k$-th token 
in the generated sequence, according to the unigram distribution 
given by the current latent.

\begin{example}[Transition Matrices]
  Consider the regex `\texttt{c(a|u)t}', which describe the regular language consisting of the words $\{\texttt{cat}, \texttt{cut}\}$, and a diffusion model with a vocabulary 
  $\set{\texttt{a, c, r, t, u}}$.
  \Cref{fig:regex_example} illustrates the 
  DFA representation of this regular language
  (top-left), a 
  distribution $\dec(\tcurr)$ that might occur from a latent $\tcurr$
  during the diffusion process, and transition matrices 
  $M_1$, $M_2$, and $M_3$ based on $\dec(\tcurr)$ when sequence length $l = 3$.
  Note how $\dec(\tcurr)$ is a unigram distribution: each column $c_i$ denotes
  probabilities that a certain token appears at the $i$-th position, independent of other positions.

  Observe how in the first transition matrix we have that 
  $M_1[q_0][q_1] = \dec(\tcurr)[\mathsf{c}][1] = 0.7$, 
  i.e., the probability that the automaton will take $q_0 \xrightarrow{\mathsf{c}} q_1$ on the first token 
  is the probability that $\mathsf{c}$ will appear at the first position according 
  to the current latent point $\tcurr$.
  Similarly, for the second matrix: $M_2[q_1][q_2] = 0.3$ and 
  $M_2[q_1][q_3] = 0.5$, 
  as $\dec(\tcurr)[\mathsf{a}][2] = 0.3$ and 
  $\dec(\tcurr)[\mathsf{u}][2] = 0.5$.

\end{example}

\begin{example}[Computing Expected Probability]
Given the transition matrices $M_1, M_2, M_3$, we start with 
$\mathbf{p}_0 = (1\ \overbrace{0 \cdots 0}^{l - 1})$---i.e., the automaton starts at the initial state with probability $1$.
The vector 
$\mathbf{p}_0 \cdot M_1 \cdot M_2 \cdot M_3$ 
represents the probability distribution over the states of the automaton 
after reading all three tokens according to the current latent $\tcurr$.
We take the probability of $q_4$ (the only accepting state)
in this distribution to obtain the expected probability.
The bottom-right of \Cref{fig:regex_example} illustrates the computation, 
where $1 * 0.7 * 0.3 * 0.5$ represents the probability that we take the path $\mathsf{c,a,t}$, and 
$1 * 0.7 * 0.5 * 0.5$ represents the probability that we take the path $\mathsf{c,u,t}$.

In particular, observe that $M_1[q_1][q_2] = 0.1$ because 
the token $\mathsf{a}$ can occur 
with probability $0.1$ in the first position according to $\dec(\tcurr)$.
However, this probability gets multiplied by $0$ when computing 
$\begin{pmatrix} 1 & 0 & 0 & 0 & 0 \end{pmatrix} \cdot M_1$,
as the probability of the automaton being in state $q_1$ is $0$ at the first token---and thus 
does not contribute to the final expected probability 
(as it should, as a sentence starting with $\mathsf{a}$ violates the regex).
\end{example}

\section{Formal Framework}
\label{sec:math}

First, we address the mismatch between regular-expression constraints and model tokenizers by constructing a vocabulary-aligned automaton (\Cref{subse:tokenizer}). 
Then, we present a dynamic programming algorithm that uses the vocabulary-aligned automaton to compute the expected probability that a sample satisfies the constraint (\Cref{subse:automaton-score}).
  Finally, we connect our approach to standard classifier guidance~\cite{classifier_guidance,songscore} and show how 
  our approach approximates the desired conditional distribution (\Cref{subse:convergence}).

\subsection{Tokenizer-Automaton Alignment}
\label{subse:tokenizer}
Before presenting the steering algorithm, we need to address a mismatch between the character-level definition of regular languages and the token-level representation used by diffusion language models. While a regular constraint specifies a set of valid strings, models generate sequences of tokens from a fixed vocabulary, and a single string may admit multiple valid tokenizations. To correctly compute expected probabilities and their gradients, the automaton representing the constraint must accept \emph{all} token sequences whose concatenation forms a valid string; otherwise, valid probability mass would be systematically ignored.

Formally, if a regular language $\mathcal{L}$ accepts a string $x$, then the DFA $A$ for $\mathcal{L}$ must accept every tokenization of $x$. For example, for the string \texttt{cat}, if the automaton accepts the token sequence \texttt{ca, t} but not \texttt{c, at}, our approach in \Cref{sec:overview} would ignore the probability of the latter and thus compute an incorrect expected probability.

We introduce a function $\alignr(A, V)$ that constructs a vocabulary-aligned automaton $A_V$ from a given DFA $A$ and vocabulary $V$.
The algorithm constructing $\alignr(A, V)$ (which is formalized in~\Cref{app:alignment-algo} and proven correct in \Cref{app:proofs}) first transforms $A$ into a character-level automaton $A_C$, whose transitions are labeled by individual characters. It then augments $A_C$ by adding transitions corresponding to multi-character tokens in $V$ whenever the underlying character path exists in $A_C$. For instance, if $V$ contains the token \texttt{ca}, the algorithm adds a transition $q_0 \xrightarrow{\texttt{ca}} q_2$ whenever a path labeled \texttt{c}\,$\to$\,\texttt{a} exists from $q_0$ to $q_2$. Tokens that do not correspond to any valid character path (e.g., \texttt{rt} in \Cref{fig:regex_example}) are ignored. Finally, the algorithm removes character-level transitions not present in $V$, yielding a token-level automaton compatible with the model vocabulary.

\subsection{Computing the Expected Probability}
\label{subse:automaton-score}

\begin{algorithm}[t]
\caption{Computing the Expected Probability}
\label{alg:automaton-score}
\textbf{Input:} DFA $A = (\Sigma, Q, \start, \delta, F)$, 
  latent $x$, 
  seq. length $l$
  
\begin{algorithmic}[1]
\STATE $(Q_V, q_{V_0}, \delta_V, F_V) \gets \alignr(R, V)$ 
\STATE $\mathbf{p}_0 \gets \overbrace{\begin{pmatrix} 1 & 0 & \cdots & 0 \end{pmatrix}}^{|Q_V|}$ 
\FOR{$k = 1$ to $l$}
  \STATE $M_k \gets 0_{|Q_V|, |Q_V|}$ 
  \FOR{$(q, q') \in \delta_V$}
    \STATE $M_k[q][q'] \gets \sum_{\tok \text{ where } \delta(q,\tok)=q'} \itrp(x)[\tok][k]$ 
  \ENDFOR
  \STATE $\mathbf{p}_{k} \gets \mathbf{p}_{k-1} M_k$
\ENDFOR
\STATE \textbf{return} $\sum_{q \in F} \mathbf{p}_l[q]$  
\end{algorithmic}
\end{algorithm}

Having solved the alignment issue, \Cref{alg:automaton-score} formalizes the algorithm 
used to compute the expected probability $\e_{s \sim x_t}[s \in \mathcal{L}]$ given a latent 
$x_t$ and a regular constraint $\mathcal{L}$.
Most concepts have already been illustrated in 
\Cref{subse:construction-example}: 
$\mathbf{p}_0$ denotes the initial probability distribution over the states $Q_V$ of the 
(aligned) automaton,
while lines 4 to 7 formalize the construction of the transition matrices $M_1, \cdots, M_l$.

\Cref{thm:expected-probability} states the correctness of \Cref{alg:automaton-score}.

\begin{theorem}[Expected Probability]
  \label{thm:expected-probability}
  Let $A$ be a DFA that represents the regular language $\mathcal{L}$. 
  Then \Cref{alg:automaton-score} returns the expected probability 
  $\e_{s \sim x}[s \in \mathcal{L}]$. 
\end{theorem}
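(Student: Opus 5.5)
The plan is to prove, by induction on $k$, an invariant stating that the vector $\mathbf{p}_k$ is exactly the distribution of the aligned automaton's state after reading the first $k$ sampled tokens. Correctness of the returned value then follows by summing over accepting states. Write $A_V = \alignr(A,V) = (Q_V, q_{V_0}, \delta_V, F_V)$, and let $s = (\tok_1,\dots,\tok_l)$ be drawn from $\itrp(x)$, so that the tokens are independent and $\Pr[\tok_k = v] = \itrp(x)[v][k]$. We take $A_V$ to be deterministic on tokens, i.e.\ $\delta_V(q,v)$ is a single state or undefined; if it is partial, we complete it with an implicit rejecting sink, which the zero entries of $M_k$ represent. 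The invariant is
\[
\mathbf{p}_k[q] \;=\; \Pr_{s\sim \itrp(x)}\bigl[\hat{\delta}_V(q_{V_0}, \tok_1\cdots\tok_k) = q\bigr]\quad\text{for all } q\in Q_V,
\]
where $\hat{\delta}_V$ is the extended transition function and the event includes definedness of the run.

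The base case $k=0$ is immediate, since $\mathbf{p}_0$ is the indicator of $q_{V_0}$. For the inductive step we condition on the state after $k-1$ tokens:
\[
\Pr[\text{state}_k = q'] = \sum_{q}\Pr[\text{state}_{k-1}=q]\cdot \Pr[\delta_V(q,\tok_k)=q' \mid \text{state}_{k-1}=q].
\]
The state after $k-1$ tokens depends only on $\tok_1,\dots,\tok_{k-1}$, and $\tok_k$ is independent of these by the unigram (product) structure of $\itrp(x)$. The conditional probability therefore equals $\sum_{v:\,\delta_V(q,v)=q'} \itrp(x)[v][k]$, which is exactly $M_k[q][q']$ as built in lines 4--7. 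Determinism guarantees that the events $\{\delta_V(q,\tok_k)=q'\}$ for distinct $q'$ are disjoint, so no probability mass is double counted. This yields $\mathbf{p}_k = \mathbf{p}_{k-1}M_k$.

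At $k=l$, summing over the disjoint events gives $\sum_{q\in F_V}\mathbf{p}_l[q] = \Pr[A_V \text{ accepts } \tok_1\cdots\tok_l]$.

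The remaining step, which I expect to be the main obstacle, is to identify this acceptance event with $s\in\mathcal{L}$. By the correctness of $\alignr$ (proved in the appendix, which we may invoke), $A_V$ accepts a token sequence iff its concatenation lies in $\mathcal{L}$; this is precisely the property that \emph{every} tokenization of a valid string is accepted. The subtle points are twofold. First, the algorithm as written uses $\delta$ and $F$ but must be read as $\delta_V$ and $F_V$. Second, the alignment must preserve determinism over tokens: multi-character token edges are added from a DFA, so each token has at most one target state from each source. This follows because character paths in a DFA are unique, but it should be stated explicitly so that the disjointness used in the inductive step is justified. Combining these facts gives $\sum_{q\in F_V}\mathbf{p}_l[q]=\e_{s\sim x}[s\in\mathcal{L}]$.
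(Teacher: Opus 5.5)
Your proposal is correct and follows the same route as the paper: induction showing that each $\mathbf{p}_k$ is the distribution of the aligned automaton's state after $k$ tokens, combined with the vocabulary-alignment theorem (\Cref{thm:alignment}) to identify acceptance with membership of the concatenation in $\mathcal{L}$. You are more explicit than the paper on two points it leaves implicit: independence of the unigram positions, and determinism of $\delta_V$, without which $\mathbf{p}_l$ would count accepting paths rather than give an acceptance probability. You also invoke alignment only at the final step rather than inside the inductive step, which is the cleaner place for it.
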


The proof is by induction on the fact that $\mathbf{p}_i$ denotes the correct 
distribution over the states of the automaton at step $i$.
The full proof is in the Appendix, \Cref{app:proofs}.

\subsection{Connection to Classifier Guidance}
\label{subse:convergence}

Rather than deriving a new convergence proof, we observe that our approach is a training-free 
instantiation of \emph{Classifier Guidance}~\citep{classifier_guidance,songscore}.
In the standard framework, conditioned generation is achieved by modifying the score function 
(the drift term in the reverse SDE) with the gradient of a log-likelihood term:
\begin{equation}
    \nabla_{x_t} \log \Pr(x_t \mid \mathcal{L}) = \nabla_{x_t} \log \Pr(x_t) + \nabla_{x_t} \log \Pr(\mathcal{L} \mid x_t)
    \label{eq:score_decomp}
\end{equation}
In \Cref{eq:score_decomp}, $\Pr(\mathcal{L} \mid x_t)$ is shorthand for $\int \Pr(\mathcal{L} \mid x_0) \cdot \Pr(x_0 \mid x_t)\ dx_t$, 
i.e., the probability that the final output $x_0$ predicted from the current latent $x_t$ satisfies the constraint $\aL$.
The first term 
$\nabla_{x_t} \log p(x_t)$
is provided by the base diffusion model, 
while the second term 
$\nabla_{x_t} \log p(\mathcal{L} \mid x_t)$---the ``guidance''---is 
typically approximated by training a separate classifier network on noisy latents $x_t$.

Our method replaces this learned classifier with the analytical expectation derived in \Cref{alg:automaton-score}:
\begin{equation}
    \log \Pr(\mathcal{L} \mid x_t) \equiv \log \mathbb{E}_{s \sim \dec(x_t)} [s \in \mathcal{L}]
    \label{eq:analytical_proxy}
\end{equation}
  Unlike a separate classifier which is trained to approximate $\Pr(\aL \mid x_t)$, our 
  analytical solution computes precisely $\int \Pr(\mathcal{L} \mid x_0) \cdot \Pr(x_0 \mid x_t)\ dx_t$. 
By using this analytical proxy, we leverage the robust theoretical foundations of guided diffusion without 
the cost or instability of training auxiliary classifiers. 
The convergence properties of our sampler thus follow directly from 
the standard results for score-based conditional generation established by \citet{anderson1982reverse,songscore}.

\section{Evaluation}
\label{sec:evaluation}

We implement our approach as a tool \name using the 
PLAID diffusion model~\cite{plaid} as the base model.
PLAID internally uses 
a 32-dimensional word2vec model as the latent space, and is
equipped with a 1.3B transformer that interprets points in this space as
unigram probability distributions 
(this transformer is trained alongside PLAID as part of its denoising network).
\name takes the gradient of this transformer composed with our implementation of 
the expected probability to obtain gradients on the word2vec latent space, 
then steers the diffusion process by adding the gradient term to the 
DDPM sampling equation that PLAID uses for denoising.
At the end of the diffusion process, PLAID (and also \name) samples using argmax from the final 
unigram distribution to produce the final concrete sentence.
We reiterate that PLAID is a \emph{continuous} diffusion model, and the decoding to a 
concrete sentence happens only once at the end of the denoising process; as opposed 
to discrete models~\cite{sedd,llada} which decode to a discrete sentence at each step 
of denoising.



\paragraph{Benchmarks}
We consider 180 regular-expression constraints spanning two categories. \textbf{JSON}: 70 derived from JSONSchemaBench~\cite{jsonschemabench} (schemas converted to regexes when possible, then filtered for non-regular constraints and overly large tokenizer-aligned automata). \textbf{Natural Language}: 110 synthetically generated natural-language patterns covering Prefix, Suffix, Appearance, Between-$n$, Between (unbounded), and Word-length templates, each instantiated by sampling one or two words from the most common words in the English vocabulary~\cite{kagglewordset} and an integer parameter (e.g., position or distance). Details on the benchmark construction are in \Cref{app:benchmarks}.

We set the sequence length for all benchmarks to 64, 
which is a sufficiently long bound for generating sequences that satisfy our regex constraints.
We run JSON generation benchmarks for 256 timesteps, because they result in complex automata that take a long time to 
compute the gradient over.
For all other benchmarks, we run generation for 1024 timesteps.
All other hyperparameters are set to the default values of the PLAID model~\cite{plaid}.

\begin{table*}[th]
\centering
\caption{Constraint satisfaction and Pass@10 rates (in \%) 
across JSON schema benchmarks.
  GPT2 models use Guidance~\cite{guidance} for constrained decoding.
Rates for \name are reported with guidance scale $\gamma = 2.5$.
}
\label{tab:satisfaction-json}
{\small
 \begin{tabular*}{\textwidth}{@{\extracolsep{\fill}} l cc cc cc cc @{}}
  \toprule
  & \multicolumn{2}{c}{\textbf{GPT2-Medium}}
  & \multicolumn{2}{c}{\textbf{GPT2-Large}}
  & \multicolumn{2}{c}{\textbf{LLaDA+DINGO}}  
  & \multicolumn{2}{c}{\textbf{\name}}
  \\
  \cmidrule(lr){2-3} \cmidrule(lr){4-5} \cmidrule(lr){6-7} \cmidrule(lr){8-9} 
  \textbf{Benchmark} & Sat. & Pass@10 & Sat. & Pass@10 & Sat. & Pass@10 & Sat. & Pass@10 \\ 
  \midrule
  JSON  & 
        75.3 & 97.1 & \textbf{77.0} & 98.6 & 63.3 & \textbf{100.0} & 68.4 & 91.4 \\
  \bottomrule
  \end{tabular*}
}
%
%
%
%
%
%
\end{table*}

\begin{table*}[th]
\centering
\caption{Combined comparison of constraint satisfaction (Sat., \%), pass@10 (P@10, \%), 
perplexity (PPL), and fluency (Flu.) across natural language benchmarks.
  GPT2-Large uses Guidance~\cite{guidance} for constrained decoding.
Perplexity and fluency are computed only on samples that satisfy the regex.
Rates for \name use guidance scale $\gamma=2.5$.}
\label{tab:combined-natural}
{\small
  \begin{tabular*}{\textwidth}{@{\extracolsep{\fill}} l cccc cccc cccc @{}}
  \toprule
  & \multicolumn{4}{c}{\textbf{GPT2-Large}}
  & \multicolumn{4}{c}{\textbf{LLaDA+DINGO}}  
  & \multicolumn{4}{c}{\textbf{\name}}
  \\
  \cmidrule(lr){2-5} \cmidrule(lr){6-9} \cmidrule(lr){10-13}
  \textbf{Benchmark}
  & Sat. & P@10 & PPL$\downarrow$ & Flu.$\uparrow$
  & Sat. & P@10 & PPL$\downarrow$ & Flu.$\uparrow$
  & Sat. & P@10 & PPL$\downarrow$ & Flu.$\uparrow$ \\
  \midrule
  Prefix
  & 43.5 & 95.0 & 401.7 & 30.9
  & \textbf{100.0} & \textbf{100.0} & 170.1 & 3.8  
  & 95.7 & \textbf{100.0} & \textbf{66.5} & \textbf{32.2}
  \\

  Suffix
  & 17.5 & 55.0 & 70.3 & 31.9
  & \textbf{100.0} & \textbf{100.0} & 219.2 & 3.4  
  & 96.8 & \textbf{100.0} & \textbf{59.3} & \textbf{36.8}
  \\

  Appearance
  & 3.2 & 10.0 & 69.7 & \textbf{40.9}
  & \textbf{99.8} & \textbf{100.0} & 252.5 & 3.8  
  & 92.5 & \textbf{100.0} & \textbf{58.5} & 34.7
  \\

  Between-$n$
  & 0.5 & 10.0 & 68.1 & \textbf{35.0}
  & \textbf{100.0} & \textbf{100.0} & 253.7 & 3.7  
  & 85.5 & 95.0 & \textbf{58.7} & 33.3
  \\

  Between (ubd.)
  & 3.5 & 10.0 & 71.2 & \textbf{47.1}
  & \textbf{95.5} & 95.0 & 234.5 & 3.6 
  & 93.8 & \textbf{100.0} & \textbf{57.4} & 33.3 
  \\

  Word Length
  & 89.5 & \textbf{100.0} & 125.0 & \textbf{44.2}
  & \textbf{100.0} & \textbf{100.0} & 217.7 & 4.0  
  & 95.0 & \textbf{100.0} & \textbf{60.7} & 43.3
  \\
  \bottomrule
  \end{tabular*}
  }
\end{table*}

\paragraph{Metrics and Baselines}
We report \textit{constraint satisfaction rate} (the fraction of generated samples that satisfy the regex constraint) and \emph{pass@10} (the fraction of benchmarks for which at least one of the first 10 samples satisfies the constraint), along with perplexity (geometric mean over all constraint-matching samples), and fluency (arithmetic mean over all constraint-matching samples). 
Perplexity is computed using LLama3.1-8B~\cite{grattafiori2024llama3herdmodels} as the reference distribution,
while fluency was measured on a scale of 0 to 100 
by using Claude Sonnet 4.5-20250929 as an external LLM judge 
(prompts to derive fluency are adapted from~\citet{readability} and can be found in the Appendix, \Cref{app:fluency}).

Our primary baselines are \rone constrained decoding for autoregressive models using the Guidance library~\cite{guidance} 
with GPT2-Small, Medium, and Large, and \rtwo constrained decoding for discrete diffusion using LLaDA-8B~\cite{llada}  
with DINGO~\cite{dingo}, a constrained decoding algorithm for regular constraints on discrete diffusion.
We also provide a short comparison against classifier guidance.
The choice of GPT-2 models follows the original PLAID
paper~\cite{plaid}, which demonstrates that 
the capacity of the base PLAID model lies in between GPT2-Small and Medium.
  LLaDA is much stronger than PLAID~\cite{llada} and is one of the strongest discrete diffusion models for text available; 
  we use it as a baseline to illustrate the current capabilities and limitations of constrained decoding on discrete diffusion.

We evaluated \name across
different guidance scales $\gamma$ (\Cref{eq:diff-gradient}). 
$\gamma=2.5$ was most effective; reported statistics use $\gamma=2.5$ unless otherwise noted.
Classifier guidance was evaluated at $\gamma=2.5$ as well.
%
We also compare our computational overhead against PLAID's.
The PLAID model also has native guidance capabilities that 
allow it to loosely model some of our regular expressions, so we make comparisons where 
applicable.

\subsection{JSON Schema Benchmarks}
\Cref{tab:satisfaction-json} summarizes the results for the 70 JSON schema benchmarks
(results for GPT2-Small are similar to those of other GPT2 models and presented in \Cref{app:additional-results}).
  When generating text sequences of length $l$, unlike autoregressive models,
  the base PLAID model is trained to emit exactly $l$ tokens regardless of whether the sequence contains an 
  $\texttt{<EOS>}$ token or not.
  Thus to make a fair comparison, we allow \name to create arbitrary padding around the JSON schema and use 
  the regex $\texttt{.}^{\texttt{*}}\texttt{schema}\texttt{.}^{\texttt{*}}$ to constrain generation; LLaDA+DINGO 
  are constrained using the same regex as well.
For the GPT models, we use the non-padded regex (i.e., \texttt{schema} only).
  Alternative padding schemes are explored in the Appendix, \Cref{app:additional-results}.

\name has comparable performance to the constrained GPT-2 models and LLaDA+DINGO for JSON schema, 
being mildly outperformed by the GPT2-models, while achieving higher satisfaction rates but a lower pass@10 rate
compared to LLaDA.
Complex structured formats pose a fundamental challenge for smaller diffusion language models such as PLAID.
Our JSON experiments demonstrate that \name overcomes this limitation 
without the need of fine-tuning the model, 
enabling reliable generation of structured outputs outside the model’s training distribution.

\subsection{Natural Language Benchmarks}

\Cref{tab:combined-natural} summarizes the results for the natural language benchmarks.
Due to space limitations, we present GPT2-Small and Medium's results in \Cref{app:additional-results}.
In terms of constraint satisfaction and pass@10 rates, 
\name outperforms 
classifier guidance and
autoregressive constrained decoding, achieving a very high
average constraint satisfaction rate of 92.9\% compared to 
20.5\% for GPT2-Large-GCD.
LLaDA and DINGO achieve satisfaction rates of near 100\%, but only do so with a significant cost in 
terms of perplexity and fluency.

The constraint satisfaction rates for the GPT2 models are low in these benchmarks primarily 
because they fall into the `infinite repetition trap': for example, given a regex
$\texttt{[A-Za-z ] cat [A-Za-z ,.]}^{\texttt{*}}$, an autoregressive model might
continuously append characters to the first word if it finds that \texttt{cat} is an unlikely 
continuation to the first word it guessed until the \texttt{max\_tokens} budget runs out.
\name is free of this problem, it will guide the denoising process towards an 
\emph{entire, coherent sentence} that contains \texttt{cat} as the second word.

\name also outperforms all GPT2 variants in terms of perplexity, 
consistently outperforming even GPT2-Large-GCD, 
with the smallest perplexity difference being $9.4$ for the between (udb.) category.
\name's fluency (35.8) surpasses those of GPT2 Small (25.0) and Medium
(32.0) and is comparable to that of GPT2-Large (37.2).
We find this result remarkable; the 
original PLAID paper~\cite{plaid} indicates that the base PLAID model sits in between 
GPT2-Small and Medium in terms of perplexity.

\Cref{tab:combined-natural} also illustrates that LLaDA+DINGO significantly \emph{worsen} the quality of 
the base model's outputs: 
LLaDA was measured to have a geomean perplexity of 23.9 unconstrained, which increases sharply 
to 
222.8 after constrained decoding with DINGO; 
the fluency rates also show that LLaDA+DINGO is generating incoherent samples.

\name, on the other hand, does not negatively affect the performance of the base model:
the average perplexity of all samples generated by \name for the natural language benchmarks is 
$60.0$ while the average fluency is $35.8$ at scale $2.5$.
Our experiments indicate that the base PLAID model displays 
$61.6$ perplexity 
and $46.0$ fluency
for unconditioned generation under the same hyperparameters 
(sequence length of $64$ with $1024$ diffusion timesteps).
This shows that the \textbf{perplexity gap} induced by \name is 
\textbf{virtually nonexistent} at scale $2.5$,
while the \textbf{fluency gap} is also \textbf{relatively low}---confirming that our approach 
in \Cref{sec:overview} allows diffusion models to obey formal syntax while 
preserving information from the original distribution.
We once again note that this result is remarkable, showing that \name outperforms 
existing discrete diffusion + constrained decoding approaches, 
\emph{despite the fact} that discrete diffusion models show much better performance unconstrained.
We provide an additional experiment that showing that \name preserves the underlying probability 
distribution in the Appendix, \Cref{app:fightclub}.

\subsection{Comparison with Classifier Guidance}
We briefly investigate how effective standard classifier guidance
is on our regular expression benchmark.
We train three versions of classifiers:
\rone a direct-input classifier given full access to model internals, including the hidden state of 
the PLAID transformer (\textbf{Full});
\rtwo a direct-input classifier with access only to the denoising latent of PLAID 
(i.e., $x_t$ in \Cref{eq:score_decomp}
(\textbf{Latent only}), and
\rthree a variant of the latent-only classifier where the denoising latent 
is passed through the backbone denoising transformer in PLAID (\textbf{With backbone})
(see \Cref{app:classifier} for details).
We compare different configurations to make sure that classifier performance is not 
limited due to the type of information supplied.
The table below reports satisfaction and pass@10 rates for the three different classifiers 
over our benchmarks.

 {\footnotesize
  \begin{tabular*}{\columnwidth}{@{\extracolsep{\fill}} l cc cc cc @{}}
  \toprule
  & \multicolumn{2}{c}{\textbf{Full}}
  & \multicolumn{2}{c}{\textbf{Latent only}}
  & \multicolumn{2}{c}{\textbf{With backbone}} \\
  \cmidrule(lr){2-3} \cmidrule(lr){4-5} \cmidrule(lr){6-7}
  & Sat. & P@10
  & Sat. & P@10
  & Sat. & P@10 \\
  \midrule
  JSON
  & 0.00 & 0.00
  & 0.00 & 0.00
  & 0.00 & 0.00 \\
  
  Natural
  & \textbf{2.27} & \textbf{10.00}
  & 1.45 & 8.18
  & 0.14 & 0.91 \\
  \bottomrule
  \end{tabular*}
  }

Classifier guidance shows very low satisfaction rates for all benchmarks, reaching 0\% for our 
JSON benchmarks, despite the fact that the classifier was trained relatively accurately, 
achieving an area-under-the-curve of 0.81, 0.74, and 0.79 respectively.
In particular, we observed that classifier guidance was effective in steering the 
trajectory towards samples that score higher on the classifier: for example, average classifier score
increased from 0.4 to 0.6 for the best-performing full classifier.
However, the increase in classifier score \emph{did not} translate to a significant increase in 
constraint satisfaction, which is a binary metric over a concrete structural constraint.
On the other hand, \name successfully optimizes for binary constraint satisfaction, by directly
increasing the expected probability of constraint satisfaction as opposed to a proxy score learned by 
a classifier.

\begin{figure*}[!t]
\centering

\begin{subfigure}[t]{0.32\textwidth}
\centering
\begin{tikzpicture}[scale=0.8, transform shape]
\begin{axis}[
    width=\textwidth,
    height=\textwidth,
    xlabel={Guidance Scale},
    ylabel={Satisfaction Rate (\%)},
    xmin=0.8, xmax=2.7,
    ymin=0, ymax=100,
    xtick={1.0, 1.5, 2.0, 2.5},
    ytick={0, 20, 40, 60, 80, 100},
    ymajorgrids=true,
    grid style=dashed,
    axis y line*=left,
    legend style={at={(0.5, 1.05)}, anchor=south, legend columns=2, draw=none, font=\footnotesize, /tikz/every even column/.append style={column sep=0.5cm}}
]

\addplot[color=RoyalBlue, mark=square*, dashed, thick]
coordinates {(1.0, 77.3)(1.5, 89.1)(2.0, 90.8)(2.5, 93.0)};
\addlegendentry{Satisfaction (NL)}

\addplot[color=BrickRed, mark=triangle*, thick]
coordinates {(1.0, 30.4)(1.5, 50.6)(2.0, 62.7)(2.5, 68.4)};
\addlegendentry{Satisfaction (JSON)}

\addlegendimage{color=ForestGreen, mark=*, thick, dotted, opacity=0.7}
\addlegendentry{PPL (NL)}
\addlegendimage{color=orange, mark=diamond*, thick, dashdotdotted}
\addlegendentry{Fluency (NL)}

\end{axis}

\begin{axis}[
    width=\linewidth,
    height=\linewidth,
    xmin=0.8, xmax=2.7,
    ymin=0, ymax=100,
    hide x axis,
    axis y line*=right,
    ylabel={PPL / Fluency},
    ylabel near ticks,
]

\addplot[
    color=ForestGreen,
    mark=*,
    mark size=2.5pt,
    dotted,
    thick,
    opacity=0.7,
]
coordinates {(1.0, 73.56)(1.5, 69.2)(2.0, 65.47)(2.5, 59.95)};

\addplot[
    color=orange,
    mark=diamond*,
    mark size=2.5pt,
    dashdotdotted,
    thick,
]
coordinates {(1.0, 39.4)(1.5, 39.3)(2.0, 37.3)(2.5, 35.8)};

\end{axis}
\end{tikzpicture}
\caption{Guidance-scale trade-offs for satisfaction, perplexity, and fluency.}
\label{fig:guidance-scale-comprehensive}
\end{subfigure}
\hfill
\begin{subfigure}[t]{0.32\textwidth}
\centering
\begin{tikzpicture}[scale=0.8, transform shape]
\begin{axis}[
    width=\linewidth,
    height=\linewidth,
    xlabel={\name (PPL)},
    ylabel={PLAID Native Guidance (PPL)},
    xmin=0, xmax=100,
    ymin=0, ymax=100,
    xtick={0, 20, 40, 60, 80, 100},
    ytick={0, 20, 40, 60, 80, 100},
    legend pos=south east,
    legend style={nodes={scale=0.8, transform shape}},
    ymajorgrids=true,
    xmajorgrids=true,
    grid style=dashed
]

\addplot[gray, dashed, thick, domain=0:100, forget plot] {x};

\addplot[
    only marks,
    color=RoyalBlue,
    mark=*,
    mark size=2pt,
    opacity=0.7
]
coordinates {
    (11.59, 25.46) (28.75, 14.48) (7.66, 23.86) (24.47, 46.36) (6.33, 13.56) 
    (3.95, 23.40) (11.18, 21.11) (23.05, 30.49) (37.04, 32.97) (16.15, 26.81) 
    (12.13, 31.73) (2.63, 45.55) (3.54, 32.34) (20.86, 22.09) (23.54, 27.54) 
    (13.69, 61.60) (18.32, 34.81) (16.38, 19.85) (2.57, 23.70) (4.86, 23.26)
};
\addlegendentry{Appearance}

\addplot[
    only marks,
    color=BrickRed,
    mark=square*,
    mark size=2pt,
    opacity=0.7
]
coordinates {
    (2.84, 15.63) (20.31, 29.27) (9.39, 18.88) (4.50, 15.07) (5.73, 35.28) 
    (26.84, 17.07) (24.21, 37.04) (5.30, 33.58) (8.94, 33.11) (35.00, 74.38) 
    (10.42, 15.12) (34.00, 28.34) (36.44, 27.92) (9.21, 24.14) (8.87, 30.54) 
    (15.33, 30.72) (5.83, 16.89) (6.68, 27.34)
};
\addlegendentry{Prefix}

\addplot[
    only marks,
    color=ForestGreen,
    mark=triangle*,
    mark size=2pt,
    opacity=0.7
]
coordinates {
    (26.47, 14.07) (4.12, 34.75) (20.92, 31.17) (3.51, 20.77) (5.38, 28.55) 
    (22.17, 34.66) (20.92, 13.76) (13.59, 36.31) (4.10, 13.41) (23.13, 19.97) 
    (7.49, 31.64) (9.12, 18.59) (3.15, 18.52) (9.69, 30.63) (3.23, 22.32) 
    (6.98, 14.03) (14.59, 20.13) (7.34, 87.97) (13.12, 20.29)
};
\addlegendentry{Suffix}


\end{axis}
\end{tikzpicture}
\caption{Minimum perplexity (PPL) comparison: \name vs. native PLAID guidance.}
\label{fig:perplexity-scatter-final}
\end{subfigure}
\hfill
\begin{subfigure}[t]{0.32\textwidth}
\centering
\begin{tikzpicture}[scale=0.8, transform shape]
  \begin{axis}[
      width=\textwidth,
      height=\textwidth,
      xlabel={Transitions ($\times 10^6$)},
      ylabel={Time per Sample (s)},
      scaled x ticks=manual:{}{\pgfmathparse{#1/1000000}},
      xmin=0, xmax=2500000,
      ymin=0, ymax=100,
      xtick={0, 500000, 1000000, 1500000, 2000000, 2500000},
      ytick={0, 25, 50, 75, 100},
      legend style={
          at={(0.5,1.12)},
          anchor=south,
          legend columns=3,
          draw=none,
          fill=none,
          font=\scriptsize,
          /tikz/every even column/.append style={column sep=0.4em}
      },
      ymajorgrids=true,
      grid style=dashed,
  ]

\draw [gray, thick, dotted] (axis cs:0,33.3) -- (axis cs:2500000,33.3) 
    node[pos=1, anchor=south east, font=\tiny, color=gray!80!black] {Unconstrained (batch size 1): 33.3 s};


\addplot[only marks, mark=*, mark size=1.5pt, color=RoyalBlue, opacity=0.6] coordinates {
  (71735,73.572) (124752,74.231) (187128,69.626) (249504,69.769) 
  (311880,72.365) (405444,71.893) (499008,74.293) (592572,75.033) (717324,78.044) (779700,79.269) (998016,81.522)
  (1122768,79.889) (1153956,83.991) (1247520,86.801) (1403460,90.319) (1497024,90.051) 
  (1840092,89.006) (1996032,93.657) (2027220,93.594) (2276724,95.885)
};
\addlegendentry{Batch Size 1}

\addplot[only marks, mark=square*, mark size=1.5pt, color=BrickRed, opacity=0.6] coordinates {
  (71735,22.855) (124752,23.166) (187128,23.330) (249504,23.918) (311880,24.473) 
  (405444,25.412) (499008,26.741) (592572,28.012) (717324,29.548) (779700,30.782) (998016,33.401)
  (1122768,34.091) (1153956,35.566) (1247520,36.823) (1403460,38.863) (1497024,39.848) (1840092,44.000) (1996032,45.962) (2027220,46.263) (2276724,49.917)
};
\addlegendentry{Batch Size 4}

\addplot[only marks, mark=triangle*, mark size=1.8pt, color=ForestGreen, opacity=0.6] coordinates {
  (71735,12.460) (124752,12.815) (187128,13.340) (249504,13.873) (311880,14.526) (405444,15.656) 
  (499008,16.977) (592572,18.336) (717324,20.107) (779700,20.861) (998016,23.408)
  (1122768,24.992) (1153956,25.411) (1247520,26.521) (1403460,28.511) (1497024,29.649) (1840092,34.139) (1996032,36.204) (2027220,36.617) (2276724,40.065)
};
\addlegendentry{Batch Size 16}

\end{axis}
\end{tikzpicture}
\caption{Generation latency vs. automaton size.}
\label{fig:latency-graph}
\end{subfigure}

\caption{Guidance-scale trade-offs, PLAID perplexity comparison, and computational overhead.}
\end{figure*}

\subsection{The Effect of Different Guidance Scales}

We study how the guidance scale trades off constraint satisfaction and generation quality in \name.
\Cref{fig:guidance-scale-comprehensive} shows how varying the guidance affects satisfaction rates, perplexity, and fluency.

As expected, increasing the guidance scale monotonically improves constraint satisfaction.
The effect is particularly pronounced for JSON, 
where satisfaction rises from $30.4\%$ at scale $1.0$ to $68.4\%$ at scale $2.5$.
We conjecture that the relatively large increase in satisfaction rate for JSON is because 
well-formed JSON is relatively further from the training distribution of the base PLAID model: 
at low guidance scales the diffusion trajectory frequently drifts into syntactically invalid regions, 
whereas stronger guidance anchors the reverse process to the JSON schema.

Surprisingly, stronger guidance does not degrade language quality.
On the contrary, perplexity measured with Llama-3.1-8B decreases substantially, 
from $73.6$ at scale $1.0$ to $60.0$ at scale $2.5$.
Fluency exhibits only a mild drop (from $39.4$ to $35.8$), 
indicating that the improvement in structural correctness is not achieved by sacrificing readability.

We hypothesize that this counterintuitive perplexity improvement arises because higher guidance 
suppresses unstable denoising trajectories that violate the regex early and subsequently require large updates.
By constraining the diffusion path to remain close to the target regular language, \name produces sequences that are both syntactically valid and more predictable under an external language model.

\subsection{Comparison with PLAID}


\paragraph{Generation quality.}
PLAID provides native guidance mechanisms that enable \emph{token-level} control over simple syntactic properties, such as increasing the probability that a specific token appears at a given position, by injecting gradients of token log-probabilities during the denoising process.
However, these mechanisms do not extend to structured constraints such as regular expressions.
In particular, PLAID operates at the tokenizer level and is therefore incompatible with regex constraints due to the automaton–token alignment problem discussed in \Cref{subse:tokenizer}.

If we temporarily ignore this limitation, PLAID can express a strict subset of our natural-language benchmarks---namely the \textsc{Prefix}, \textsc{Suffix}, and \textsc{Appearance} categories.
We therefore compare PLAID and \name on this common subset, using a guidance scale of $2.5$ for both methods.
PLAID achieves an average satisfaction rate of $59.4\%$ on these benchmarks, whereas \name reaches $92.9\%$, demonstrating that automaton-level guidance is substantially more effective than token-level control even when both are applied at comparable strength.

\Cref{fig:perplexity-scatter-final} reports a scatter plot of the minimum-perplexity samples obtained 
for benchmarks that both PLAID and \name could solve.
The scatter plot shows that \name is advantageous in generating lower-perplexity samples,
where \name's minimum perplexity is on average $10.4$ compared to $25.7$ for PLAID.
PLAID does have a slightly lower average perplexity for \emph{all} generated samples that meet the constraint, 
at $55.65$ compared to $60.27$ for \name; we attribute this difference to the fact that 
\name has a much higher constraint satisfaction rate.

\paragraph{Computational overhead.}
We next examine the computational overhead introduced by \name.
\Cref{fig:latency-graph} plots per-sample generation latency for 20 representative benchmarks, 
across batch sizes of $1$, $4$, and $16$, as a function of the number of transitions in the aligned automaton.

   Overhead scales roughly linearly with the number of transitions, where a main element of the overhead is caused by 
   Line 6 of \Cref{alg:automaton-score} which computes a dense sum over all automaton transitions.
   Although this computation is parallelized on the GPU, our automata contain a large amount of transitions 
   ($71735$ for the smallest automaton, from the regex \texttt{[A-Za-z]+ to [A-Za-z .,]*} from the prefix category), 
   and thus computing a dense sum over the transition matrix exerts 
   significant pressure on memory bandwidth leading to overhead.
   The $71735$-transition regex required 73.6 seconds to generate a sample at batch size=1, compared to 
   33.3 seconds for unconstrained generation on PLAID under the same constraints.
   Overall, \name incurred a latency of around 2 to 3 times higher compared to unconditional generation, 
   with higher latencies incurred by automata with more transitions.
   We do note that an intuitively `complex' regex will not necessarily result in an automaton with 
   a higher number of transitions: for example, wildcard matches create transitions equal to the number of tokens inside 
   the tokenizer, and thus 
   a simpler regex such as \texttt{.* to .* .* and .*} can result in a significantly 
   higher-transition automaton compared to a more complex but constrained JSON schema regex.

  In order to take advantage of GPU capabilities, our current implementation of \name phrases most operations 
  as matrix multiplications, causing memory overhead in \name through two large matrices:
  \rone the matrix for computing the dense sum over the unigram distribution $\dec(x)$, which size scales linearly to the number of 
  transitions inside the automaton, and 
  \rtwo the per-token-position transition matrices, each of which sizes scale with the square of the number of states inside the 
  automaton.
  For very large automata, such as those containing more than 3 million transitions, these matrices can consume
  significant amounts of memory, more than the base diffusion model itself.

While \name does incur both latency and memory overhead, 
our primary goal in this work is to establish that diffusion models can reliably obey formal syntactic 
constraints—often more robustly than autoregressive approaches, as reflected in our perplexity and fluency results.
Designing memory-efficient data structures and custom gradient kernels for \Cref{alg:automaton-score} is a promising direction for future work, 
and we expect optimizations like those used in constrained decoding~\cite{park2025flexibleefficientgrammarconstraineddecoding} 
to reduce the overhead of \name by one to two orders of magnitude.

\section{Related Work}
\label{sec:related}

Diffusion models, including DDPMs~\cite{ddpm} and Variational Diffusion Models~\cite{vdm}, were originally developed for continuous data and have recently been adapted to text using both continuous and discrete paradigms.
Continuous variants~\citet{diffusionlm,plaid,hypersphere} perform diffusion in a continuous latent space that is decoded to text; for example, PLAID diffuses in a 32-D word2vec space and decodes to a unigram distribution over sentences.

Discrete variants operate directly on token sequences by defining the forward noise-adding  process as a probabilistic transition matrix~\cite{sedd}.
Masking diffusion models (D3PM~\cite{d3pm}, Dream~\cite{dream}, LLaDA~\cite{llada}) introduce a special mask token and corrupt sequences by replacing tokens with masks, while others such as SEDD-Uniform~\cite{sedd} instead replace tokens with other vocabulary items.

Several approaches support constrained decoding for discrete masking diffusion models.
DINGO~\cite{dingo} enforces regular constraints, and \citet{vechevdiff} supports context-free constraints 
(which reduce to regular constraints for bounded length).
These methods filter out unmaskings that must violate the constraint, but are limited to masking-based discrete models 
and distort the underlying model distribution.
They do not apply to continuous diffusion models (our target) or mask-free discrete models such as SEDD~\cite{sedd}.

Our work focuses on \emph{continuous} diffusion models (PLAID in particular), 
modifying the denoising equation for continuous diffusion models 
a l\`a classifier guidance.
Because classifier guidance in theory is applicable to discrete models as well, 
we expect to be able to apply our approach to discrete diffusion models by developing a variant 
of \Cref{alg:automaton-score} that efficiently computes the expected probability 
for latent spaces of discrete models.
We leave this idea as an interesting venue for future work.

  \name currently only supports regular constraints, but in theory can be used to support context-free constraints as well, as bounding the 
  length of a context-free language results in a regular language.
  \citet{vechevdiff} investigates how to impose these length constraints on a context-free language efficiently; extending these ideas to compute 
  expected probabilities over length-bound context-free languages would be an interesting venue for future work.

The standard approach for steering diffusion models is \emph{classifier guidance}~\cite{classifier_guidance, songscore}, which relies on a separately trained classifier. 
Our method follows the same principle but replaces the classifier with \Cref{alg:automaton-score}, which computes the exact expected probability induced by a regular constraint.

PLAID~\cite{plaid} also supports limited steering using token-level probability gradients (e.g., enforcing the presence or absence of specific tokens at fixed positions).
Such methods cannot express \textit{global or sequential constraints} (e.g., `Orange comes after Apple'), which require tracking dependencies across the full sequence.

Grammar-constrained decoding (GCD)~\cite{gcd} and its many implementations~\cite{guidance,park2025flexibleefficientgrammarconstraineddecoding,ugare2024syncodellmgenerationgrammar,outlines} enforce syntactic constraints for autoregressive models by filtering out tokens that are guaranteed to eventually result in a string that violates the given grammar.
GCD distorts the model distribution and can lead to the infinite repetition trap for complex grammars~\cite{park2024grammaraligned}.
While recent sampling methods reduce this distortion~\cite{park2024grammaraligned,cars,gonzalez2025}, their benefits mainly arise when drawing hundreds of samples for the same task.
As demonstrated in our evaluation, \name targets the desired conditional distribution and 
does not suffer from the infinite repetition trap.

\section{Conclusion}
\label{sec:conclusion}

We presented \name, a method for steering continuous diffusion language 
models to satisfy formal syntactic constraints specified by a finite automaton (or a regex).
\name guides diffusion sampling using gradients of the \emph{exact} 
automaton acceptance probability under the model's current latent distribution, avoiding training a separate classifier.
Using PLAID as the base model, we find that \name is capable of successfully enforcing complex regular 
constraints while maintaining competitive generation quality.

While \name's current computational overhead is relatively high compared to 
other methods, we believe existing work 
on succinctly representing automata~\cite{park2025flexibleefficientgrammarconstraineddecoding}, 
can lift to \name; a promising direction of future work.

\section*{Acknowledgments}
  Supported, in part, by the National Science Foundations under grants
  CCF-2146151, CCF-2546822, CCF-2506134, CCF-2446711, and CCF-2422214; 
  the Schmidt Foundation; 
  by a Microsoft Faculty Fellowship; 
  and a grant from the Korea Foundation for Advanced Studies.
  Any opinions, findings, and conclusions or recommendations expressed in this publication are those of the authors, 
  and do not necessarily reflect the views of the sponsoring entities.

\section*{Impact Statement}
This paper presents work whose goal is to advance the field of Machine
Learning. There are many potential societal consequences of our work, none
which we feel must be specifically highlighted here.


\bibliography{reference}
\bibliographystyle{icml2026}

\newpage
\appendix
\onecolumn

\section{Hardware and Software}
\label{app:hardware} 
Our experiments were conducted on a Ubuntu 20.04.5 LTS server equipped with a AMD EPYC 7282 CPU 
(16-cores at 2.8GHz) and NVIDIA RTX A6000 GPUs.
We ran all experiments using a single GPU.
Our implementation is based on Python 3.10.19, 
PyTorch 2.0.1 with CUDA 11.8, and 
Flash Attention 1.0.9.


\section{Model Checkpoint}
\label{app:models}
We use the following models in our evaluation:
\begin{itemize}
    \item \textbf{PLAID}: From the PLAID github repository \url{https://github.com/igul222/plaid} (commit \texttt{e7d5cf8})

    \item \textbf{GPT2}: \citet{gpt2} with specific models being:
      \begin{itemize}
        \item \textbf{GPT2-Small}: \url{https://huggingface.co/openai-community/gpt2} (commit \texttt{607a30d})
        \item \textbf{GPT2-Medium}: \url{https://huggingface.co/openai-community/gpt2-medium} (commit \texttt{6dcaa7a})
        \item \textbf{GPT2-Large}: \url{https://huggingface.co/openai-community/gpt2-large} (commit \texttt{32b71b1})
      \end{itemize}
    
    \item \textbf{Llama-8B}: Llama-3.1-8B~~\citep{grattafiori2024llama3herdmodels}: \url{https://huggingface.co/meta-llama/Llama-3.1-8B} (commit \texttt{d04e592})
    
    \item \textbf{Claude}: Claude Sonnet 4.5-20250929.
\end{itemize}

\section{Fluency Evaluation Prompt}
\label{app:fluency}
We use the following prompt when querying Claude Sonnet 4.5 for fluency, for the generated sample \texttt{sample}.

\begin{quote}
\small\ttfamily
"""Read the text below. Then, indicate the fluency of the text, on a scale from 1
(poor fluency) to 100 (excellent fluency). In your assessment, consider factors such as
grammatical correctness, naturalness of language, and overall smoothness.

Text to evaluate:
$<$Text$>$
\{sample\}
$<$/Text$>$

On a scale from 1 (poor fluency) to 100 (excellent fluency), how fluent is this text?
Respond in JSON format with two fields:
- "score": an integer from 1 to 100
- "reasoning": a brief explanation (1-2 sentences)

Example response:

\{\{"score": 75, "reasoning": "The text is mostly fluent with natural phrasing, though there is one minor grammatical error."\}\}

Your response (JSON only, no other text):"""
\end{quote}


\section{Vocabulary-DFA Alignment Algorithm Details}
\label{app:alignment-algo}
We first formally define deterministic finite-state automata.
\begin{definition}[Deterministic Finite-State Automaton]
  \label{def:regex}
  A deterministic finite-state automaton (DFA) is defined as a 5-tuple 
  $A = (\Sigma, Q, \start, \delta, F)$, 
  where 
  $\Sigma$ is a finite set of inputs called the alphabet, 
  $Q$ is a finite set of states, 
  $\start \in Q$ is the initial state, 
  $\delta$ is a transition function $\delta: Q \times \Sigma \rightarrow Q$, 
  and
  $F \subseteq Q$ is a set of final states.

  The extended transition function $\dhat$ of $A$ is defined as the function 
  $\dhat: Q \times \Sigma^* \rightarrow Q$
  such that $\dhat(q, \epsilon) = q$ for $q \in Q$ and the empty string $\epsilon$, and 
  $\dhat(q, wc) = \delta(\dhat(q, w), c)$ where $w \in \Sigma^*$ and $c \in \Sigma$.

  We say that $a$ \emph{accepts} a string $x \in \Sigma^*$ iff 
  $\dhat(\start, x) \in F$.
\end{definition}

\begin{algorithm}[t]
\caption{Vocabulary-DFA Alignment $\alignr(A, V)$}
\label{alg:alignment}
\textbf{Input:} DFA $A = (\Sigma, Q, q_0, \delta, F)$, vocabulary $V$

\begin{algorithmic}[1]
\STATE // Split $A$ into char-level DFA $A_C$
\STATE $(C, Q_C, q_{C_0}, \delta_C, F_C) \gets \textsc{CharSplit}(A)$
\STATE $\delta_V \gets \emptyset$
\FOR{$\tok \in V$}
    \STATE // Split $\tok$ into char-level sequence $c_{\tok}$
    \STATE $c_{\tok} \gets \textsc{Charify}(v)$
    \FOR{$q_C \in Q_C$}
        \STATE  // Traverse $c_{\tok}$ in DFA starting from $q_C$
        \STATE $q'_C \gets \textsc{Traverse}(c_{\tok}, q_C, \delta_C)$ 
        \IF{$q'_C \neq \mathsf{null}$}
            \STATE $\delta_V \gets \delta_V \cup \{(q_C, \tok, q'_C)\}$
            \STATE $C \gets C \cup \tok$
        \ENDIF
    \ENDFOR
\ENDFOR
\STATE // Remove char-level transitions not in $V$
\STATE $\delta_V \gets \textsc{Remove}(\delta_V, V)$
\STATE \textbf{return} DFA $A_V = (C, Q_C, q_{C_0}, \delta_V, F_C)$
\end{algorithmic}
\end{algorithm}

In \Cref{alg:alignment}, \textsc{Charify} is a function that takes as input a token $v$ and 
splits it into characters, which is the smallest unit of strings that is common to both the 
automaton and the tokenizer.
For example, a tokenizer might contain the word \texttt{cat}; this token can be \textsc{Charify}ed
into \texttt{c, a, t}.
The idea is that the character-level automaton $A_C$ will now be able to accept this character-level 
string, because all transitions inside the automaton have been factorized into character-level transitions.
\textsc{Traverse} in \Cref{alg:alignment} performs this operation, checking for every state in the automaton 
there exists a path to another state that consumes the characters in $c_{\tok}$.
If there is, then there is a valid path inside the automaton for the token $\tok$ and we add the path 
as a transition.

\Cref{thm:alignment} in \Cref{app:proofs} states the correctness of \Cref{alg:alignment}.

%

\section{Proofs}
\label{app:proofs}

We provide simple proofs for \Cref{thm:alignment} and \Cref{thm:expected-probability}.

\begin{theorem}[Vocabulary Alignment]
  \label{thm:alignment}
  Let $\mathcal{L}$ be a regular constraint and $A$ be the DFA representation of 
  $\mathcal{L}$.
  Then $\alignr(A, V)$ as defined by \Cref{alg:alignment} returns a DFA $A_V$ such that 
  $A_V$ accepts a sequence of tokens $\tok_1, \cdots, \tok_l$ 
  iff $\tok_i\in V$ for all $1 \leq i \leq l$ and 
  the concatenation $\mathsf{concat}(\tok_1, \cdots, \tok_l) \in \mathcal{L}$.
\end{theorem}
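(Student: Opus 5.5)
The plan is to make precise what \textsc{CharSplit}, \textsc{Traverse} and \textsc{Remove} do, and then prove a single invariant on token-level runs. That invariant yields both directions of the iff by induction on $l$.

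\textbf{Assumptions on the subroutines.} I take \textsc{CharSplit}$(A)$ to return a character-level DFA $A_C = (C, Q_C, q_{C_0}, \delta_C, F_C)$ with $Q \subseteq Q_C$, $q_{C_0} = q_0$ and $F_C = F$. Its key property is
\[
\hat\delta_C(q, w) = \hat\delta(q, w) \quad \text{for all } q \in Q \text{ and } w \in C^*.
\]
Here each original transition labeled by a (possibly multi-character) symbol is replaced by a chain of fresh intermediate states. Since $A_C$ is deterministic, it accepts exactly the character strings of $\mathcal{L}$. Concretely, $\hat\delta_C(q_{C_0}, w) \in F_C$ iff $w \in \mathcal{L}$, where $\hat\delta_C$ is undefined ($\mathsf{null}$) on missing transitions.

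\textsc{Traverse}$(c_\tok, q_C, \delta_C)$ returns $\hat\delta_C(q_C, c_\tok)$, or $\mathsf{null}$ if that run gets stuck. \textsc{Remove} deletes every triple whose label is not in $V$. Hence after the loop,
\[
(q, \tok, q') \in \delta_V \iff \tok \in V \text{ and } \hat\delta_C(q, \mathsf{Charify}(\tok)) = q'.
\]
This characterization is what I will use throughout. Determinism of $A_V$ follows immediately, since each pair $(q, \tok)$ has at most one target.

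\textbf{Key lemma.} For every state $q$ and tokens $\tok_1, \dots, \tok_l$, the extended function $\hat\delta_V(q, \tok_1\cdots\tok_l)$ is defined iff every $\tok_i \in V$ and $\hat\delta_C(q, \mathsf{concat}(\tok_1, \dots, \tok_l))$ is defined. When both are defined, they are equal.

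I prove this by induction on $l$.
\begin{itemize}
\item The base case $l = 0$ is trivial, because the empty concatenation is $\epsilon$.
\item For the step, write $\hat\delta_V(q, \tok_1\cdots\tok_{l+1}) = \delta_V(\hat\delta_V(q, \tok_1\cdots\tok_l), \tok_{l+1})$. By the induction hypothesis the inner state is $p = \hat\delta_C(q, w)$ with $w = \mathsf{concat}(\tok_1, \dots, \tok_l)$. The characterization of $\delta_V$ then gives $\delta_V(p, \tok_{l+1}) = \hat\delta_C(p, \mathsf{Charify}(\tok_{l+1}))$, provided $\tok_{l+1} \in V$.
\item The homomorphism property of extended transition functions, $\hat\delta_C(q, wu) = \hat\delta_C(\hat\delta_C(q, w), u)$, finishes the step. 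It holds by a routine induction on $|u|$ and extends to partial functions, with $\mathsf{null}$ propagating.
\end{itemize}

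\textbf{The theorem.} Apply the lemma at $q = q_{C_0}$. Since the final states are $F_C$, $A_V$ accepts $\tok_1, \dots, \tok_l$ iff all $\tok_i \in V$ and $\hat\delta_C(q_{C_0}, \mathsf{concat}) \in F_C$. By the \textsc{CharSplit} property, the latter holds iff $\mathsf{concat}(\tok_1, \dots, \tok_l) \in \mathcal{L}$.

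\textbf{Main obstacle.} The delicate step is pinning down \textsc{CharSplit} and \textsc{Remove}, since Algorithm~\ref{alg:alignment} leaves them informal.

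For \textsc{CharSplit}, I would state its specification as a separate lemma, $\hat\delta_C(q_0, w) \in F_C \iff w \in \mathcal{L}$, and prove it by induction over the transitions of $A$.

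\textsc{Remove} needs more care. It must not delete single-character tokens that are genuinely in $V$; it removes only character transitions $(q, c, q')$ with $c \notin V$. Intermediate states created by \textsc{CharSplit} may remain in $Q_C$ as unreachable or non-final states. These do not affect acceptance, because the characterization of $\delta_V$ only ever consults runs of $A_C$. I would note explicitly that the triples added in line 11 are exactly the $V$-labeled ones, so \textsc{Remove} is consistent with the characterization of $\delta_V$.
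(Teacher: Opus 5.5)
Your proposal is correct and takes the same route as the paper. The paper argues both directions ``by construction'': $A_V$ has no transitions on tokens outside $V$, and $A_C$ accepts $\textsc{Charify}(\mathsf{concat}(\tok_1,\dots,\tok_l))$ exactly when the concatenation lies in $\mathcal{L}$; your key lemma, that token-level runs of $A_V$ coincide with character-level runs of $A_C$ on the concatenation (by induction on $l$), is precisely the invariant that sketch leaves implicit. The one thing to tidy is that replacing multi-character transitions by chains of fresh states need not give a deterministic $A_C$ (two labels may share a first character), so you should state determinism of $A_C$ as part of the specification of \textsc{CharSplit} rather than derive it from the chain construction.
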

\begin{proof}
  That $A_V$ accepts $\tok_1, \cdots, \tok_l$ if the conditions hold follows by construction of 
  $A_V$, as the character-level automaton $A_C$ accepts 
  $\textsc{Charify}(\mathsf{concat}(\tok_1, \cdots, \tok_l))$.

  That $A_V$ accepts $\tok_1, \cdots, \tok_l$ \emph{only if} the condition holds follows from the 
  fact that \rone if there exists $\tok_i$ such that $\tok_i \not \in V$, then there cannot be 
  a transition in $A_V$ that consumes $\tok_i$ by construction; and 
  \rtwo if $\mathsf{concat}(\tok_1, \cdots, \tok_l) \not \in \mathcal{L}$, 
    the $\textsc{Charify}$ of this string is not accepted by $A_C$ and thus cannot 
    be accepted by $A_V$ as well.
\end{proof}

\begin{reptheorem}{thm:expected-probability}[Expected Probability]
  Let $A$ be a DFA that represents the regular language $\mathcal{L}$. 
  Then \Cref{alg:automaton-score} returns the expected probability 
  $\e_{x \sim \theta}[x \in \mathcal{L}]$.
\end{reptheorem}
\begin{proof}
    By induction on the correctness of the intermediate state distributions $\mathbf{p}_i$.
    As the base case, $\mathbf{p}_0$ clearly represents the correct distribution as the 
    automaton always starts from the initial state.
    Assume that $\mathbf{p}_{i}$ has the correct distribution.
    Then $\mathbf{p}_{i+1}$ also has the correct distribution, if the transition matrix 
    $M_{i+1}$ is correct.
    By \Cref{thm:alignment}, we have that $A$ accepts all possible tokenizations of 
    a string $x \in \mathcal{L}$, which are summed in line 6 of \Cref{alg:automaton-score}, 
    and thus $M_{i+1}$ correctly captures the probability that $A$ can make a transition 
    between states when accepting any tokenization of a string $x \in \mathcal{L}$. 
    It follows from induction that $\mathbf{p}_i$ contains the correct distribution of states 
    for every $i$, and thus that \Cref{alg:automaton-score} returns the expected probability.
\end{proof}

\section{Classifier Training Specifics}
\label{app:classifier}

We train three different classifiers. 
The \textbf{Full} classifier is a direct-input architecture classifier containing 10.22M parameters, and takes as input 
\rone the current PLAID latent $z_t$,
\rtwo the transformer-decoder hidden state $u_t$, 
\rthree the unigram distribution $\dec(z_t)$ 
(i.e., the per-position token log-probabilities), 
\rfour the target regex $\aL$, and 
\rfive the timestep $t$. 
The \textbf{Latent only} classifier is a direct-input architecture classifier containing 2.12M parameters, and takes as input
\rone the current PLAID latent $z_t$,
\rtwo the target regex $\aL$, and
\rthree the timestep $t$.
It does not take as input the transformer-decoder hidden state $u_t$ or the unigram distribution $\dec(z_t)$.
The \textbf{With backbone} classifier contains 4.44M trainable classifier parameters, excluding the frozen PLAID backbone, and takes as input
\rone the transformer-decoder hidden state $u_t$ computed by running the frozen PLAID denoiser on the current latent $z_t$,
\rtwo the target regex $\aL$, and
\rthree the timestep $t$.
The with-backbone classifier does not take the raw latent $z_t$ or the unigram distribution $\dec(z_t)$ as direct classifier inputs; 
however, during guidance the gradient is backpropagated through the frozen PLAID backbone to update $z_t$.
We intentionally explore different classifier configurations to make sure classifier performance is not limited 
due to the types of information supplied to the classifier.

We trained the classifiers only on the regexes within our benchmark. 
For each regex, we created 512 positive examples of length 64 tokens, matching the sequence length in our experiments. 
For negative samples, we used a mixture of positive samples from other benchmark regexes that do not match the current target regex, 
together with near-miss mutations of positive samples for the current regex. 
Training was performed by sampling from the positive and negative regex pools with equal probability, 
then noising the regexes using PLAID’s own noising mechanism, following standard diffusion classifier-guidance training. 
The classifier saw roughly 240000 training pairs total. 
We note that the training setup is favourable to the classifier: 
it is trained precisely on the set of benchmark regexes, 
on examples that match exactly the sequence length used at test time, rather than a general set of regexes and sequence lengths.
The trained classifiers plateaued respectively at an area-under-the-curve of 0.81 (Full), 0.74 (Latent only), and 
0.79 (With backbone); we took the best-scoring checkpoint for our experiments.

\section{Detailed Benchmark Description}
\label{app:benchmarks}

\paragraph{JSON schema benchmarks}
For JSONSchemaBench, we begin by sampling 10 schemas from each of the 10 benchmark subcategories (\texttt{GitHub-trivial}, \texttt{GitHub-easy}, etc.), for a total of 100 schemas. We then convert each schema into a regular expression when possible. We exclude 10 schemas whose constraints are not expressible using regular languages (e.g., nested objects or unbounded lists), yielding 90 regexes.

Our current implementation of tokenizer-aligned automata (\Cref{subse:tokenizer}) is not yet optimized and can produce prohibitively large automata for certain patterns, particularly those with many wildcard transitions. To ensure tractable evaluation, we further exclude 20 regexes whose aligned automata exceed 230 states or 7.5M transitions. This results in a final set of 70 JSONSchemaBench-derived regexes.

Developing a more compact representation for tokenizer-aligned automata (e.g., using symbolic automata~\cite{sfa}) is an interesting direction for future work.

\paragraph{Natural language benchmarks}
The 110 natural-language regular expressions are generated synthetically using a fixed vocabulary and a small set of template patterns. We first take the 100 most frequent English words according to~\cite{kagglewordset}. To generate a benchmark regex in each of the following templates, we uniformly sample one or two words uniformly from this list (denoted \texttt{WORD}, or \texttt{WORD1}/\texttt{WORD2} when two words are required) and a value of $n$ that satisfies the constraint:
\begin{description}
  \item[] Prefix. \texttt{WORD} appears exactly at the $n$-th position from the beginning of the sentence, for $1 \leq n \leq 5$.

  \item[] Suffix. \texttt{WORD} appears exactly at the $n$-th position from the end of the sentence, for $1 \leq n \leq 3$.

  \item[] Appearance. Both \texttt{WORD1} and \texttt{WORD2} appear somewhere in the sentence, in any order.

  \item[] Between-$n$. \texttt{WORD1} appears before \texttt{WORD2}, with exactly $n$ words in between, for $1 \leq n \leq 3$.

  \item[] Between (unbounded). \texttt{WORD1} appears before \texttt{WORD2}, with an arbitrary number of intervening words.

  \item[] Word-length. The sentence contains at least one word of exactly $n$ characters, for $1 \leq n \leq 10$.
\end{description}
From the first five template categories, we sample 20 regexes per category. We additionally include the 10 word-length regexes (one for each length $n \in \{1,\dots,10\}$), resulting in 110 natural-language benchmarks in total.

For evaluation, we generate 20 samples for each natural-language benchmark and 10 samples for each JSON schema benchmark, mainly because some JSON benchmarks result in very large regexes and aligned automata that consume significant amounts of compute when generating many samples.

\section{Additional Results}
\label{app:additional-results}

\begin{table*}[th]
\centering
\caption{Constraint satisfaction and Pass@10 rates for \name on JSON benchmarks using different padding schemes.
\texttt{<json>} indicates the original JSON schema while \texttt{<EOS>} denotes the end-of-sequence token.
Rates are reported with guidance scale $\gamma = 2.5$.
}
\label{tab:padding-table}
{\small
 \begin{tabular*}{\textwidth}{@{\extracolsep{\fill}} l cc cc cc @{}}
  \toprule
  & \multicolumn{2}{c}{\texttt{.*<json>.*}}
  & \multicolumn{2}{c}{\texttt{<json><EOS>.*}}
  & \multicolumn{2}{c}{\texttt{<json><EOS>+}} \\
  \cmidrule(lr){2-3} \cmidrule(lr){4-5} \cmidrule(lr){6-7}
  \textbf{Metric} & Sat. & Pass@10 & Sat. & Pass@10 & Sat. & Pass@10 \\
  \midrule
  JSON & \textbf{68.4} & \textbf{91.4} & 56.8 & 86.5 & 33.6 & 68.8 \\
  \bottomrule
  \end{tabular*}
}
\end{table*}
\begin{table*}[th]
\centering
\caption{
  Comparison of results across benchmarks for GPT2-Small and Medium against \name.
  Results for \name are reported with guidance scale $2.5$.
  Perplexity and fluency are omitted for JSON benchmarks.
}
\label{tab:gpt2-small}
{\small
 \begin{tabular*}{\textwidth}{@{\extracolsep{\fill}} l cc cc cc cc cc cc @{}}
  \toprule
  & \multicolumn{4}{c}{\textbf{GPT2-Small-GCD}}
  & \multicolumn{4}{c}{\textbf{GPT2-Medium-GCD}}  
  & \multicolumn{4}{c}{\textbf{\name}}
  \\
  \cmidrule(lr){2-5} \cmidrule(lr){6-9} \cmidrule(lr){10-13}
  \textbf{Benchmark}
  & Sat. & P@10 & PPL$\downarrow$ & Flu.$\uparrow$
  & Sat. & P@10 & PPL$\downarrow$ & Flu.$\uparrow$
  & Sat. & P@10 & PPL$\downarrow$ & Flu.$\uparrow$ \\
  \midrule
  JSON
  & \textbf{79.3} & \textbf{98.0} & -- & --
  & 75.3 & 97.1 & -- & --  
  & 68.4 & 91.4 & -- & --
  \\

  Prefix
  & 42.8 & 95.0 & 505.1 & 22.9
  & 42.8 & 95.0 & 539.9 & 25.3 
  & \textbf{95.7} & \textbf{100.0} & \textbf{66.5} & \textbf{32.2}
  \\

  Suffix
  & 19.5 & 60.0 & 123.8 & 21.5
  & 17.0 & 70.0 & 115.4 & 27.2 
  & \textbf{96.8} & \textbf{100.0} & \textbf{59.3} & \textbf{36.8}
  \\

  Appearance
  & 1.8 & 5.0 & 124.3 & 20.7
  & 2.0 & 5.0 & 133.8 & 33.8  
  & \textbf{92.5} & \textbf{100.0} & \textbf{58.5} & \textbf{34.7}
  \\

  Between-$n$
  & 0.0 & 0.0 & -- & --
  & 0.2 & 0.0 & 1946.8 & 5.0 
  & \textbf{85.5} & \textbf{95.0} & \textbf{58.7} & \textbf{33.3}
  \\

  Between (ubd.)
  & 1.5 & 0.0 & 123.3 & 22.2
  & 0.2 & 5.0 & 132.6 & 33.3 
  & \textbf{93.8} & \textbf{100.0} & \textbf{57.4} & \textbf{33.3}
  \\

  Word Length
  & 88.0 & \textbf{100.0} & 312.9 & 28.8
  & 90.5 & \textbf{100.0} & 169.4 & 40.2
  & \textbf{95.0} & \textbf{100.0} & \textbf{60.7} & \textbf{43.3}
  \\
  \bottomrule
  \end{tabular*}
}
\end{table*}
\begin{table*}[th]
\centering
\caption{
  Comparison of average perplexity and fluency between matching, non-matching, and all samples generated by \name 
  at guidance scale $2.5$.
}
\label{tab:match-ppl}
{\small
 \begin{tabular*}{\textwidth}{@{\extracolsep{\fill}} l c ccc ccc @{}}
  \toprule
  \textbf{Benchmark} & Sat rate (\%)
  & \multicolumn{3}{c}{PPL$\downarrow$}
  & \multicolumn{3}{c}{Flu.$\uparrow$} \\
  \cmidrule(lr){3-5} \cmidrule(lr){6-8}
  & & Match & Non-Match & All & Match & Non-Match & All \\
  \midrule 
  JSON           & 68.4 & 23.4 & 28.4 & 24.8 & --   & --   & -- \\
  Prefix         & 95.7 & 66.5 & 90.8 & 67.4 & 32.2 & 38.9 & 32.5 \\
  Suffix         & 96.8 & 59.3 & 55.8 & 59.2 & 36.8 & 36.2 & 36.8 \\
  Appearance     & 92.5 & 58.5 & 45.2 & 57.4 & 34.7 & 28.1 & 34.2 \\
  Between-$n$    & 85.5 & 58.7 & 64.2 & 59.2 & 33.3 & 40.8 & 34.4 \\
  Between (ubd.) & 93.8 & 57.4 & 50.4 & 57.0 & 33.3 & 30.0 & 33.1 \\
  Word Length    & 95.0 & 60.7 & 19.6 & 57.3 & 43.3 & 1.0  & 41.2 \\
  \bottomrule
  \end{tabular*}
}
\end{table*}

\Cref{tab:padding-table} compares satisfaction and pass@10 rates achieved by \name using different schemes of padding.
\texttt{.*<json>/*} allows arbitrary padding, is closest to PLAID's original training semantics, and what we used in the main text;
\texttt{<json><EOS>.*} is equivallent to truncating at the first \texttt{<EOS>} token and matching the prefix against the JSON schema, 
most similar to the standard semantics of \texttt{<EOS>}; 
\texttt{<json><EOS>+} is a stricter version of the previous configuration that forces the model to emit \texttt{<EOS>} until 
the end of the sequence.
Satisfaction rates and pass@10 decrease as the regex enforces tighter \texttt{<EOS>} semantics; 
we understand this as because the base PLAID model rarely emits \texttt{<EOS>} tokens during unconstrained generation 
(only 5.1\% of samples contain at least one \texttt{<EOS>}), showing that the \texttt{<EOS>} token is quite out of distribution for the
original model.
Nevertheless, all three configurations show a similar trend in that \name succeeds in constraining the base PLAID model towards 
outputs that satisfy complex regular constraints.

\Cref{tab:gpt2-small} compares results across different benchmark categories for \name and GPT2-Small 
(with constrained decoding via Guidance~\cite{guidance}).
Results for GPT2-Small and Medium are largely similar to those for GPT2-Large, with GPT2-Small+GCD achieving high but comparable 
satisfaction rates for the JSON benchmarks,
and constrained decoding affecting the natural language benchmarks particularly severely.

\Cref{tab:match-ppl} provides an comparison of quality metrics (perplexity and fluency) over the samples generated 
by \name, classified into those matching the regex, those that do not, and an overall aggregate.
There was no clear trend showing that matching samples achieve higher quality compared to non-matching ones or vice versa, 
or that the matching and non-matching samples show a significant difference in terms of quality.
The one outlier is word length, where non-matching samples show significantly lower perplexity \emph{and} fluency, 
due to the fact that the non-matching sample set is very small (only 10 samples)
that consist of degenerate samples that repeat phrases.
\Cref{tab:match-ppl} serves as further evidence that \name is preserving the quality of the underlying model 
as constrained generation proceeds.

\section{Additional experiment showing \name preserves the underlying probability distribution}
\label{app:fightclub}

As an additional experiment that \name preserves the underlying distribution of the diffusion model, 
we test \name and GPT2-Small-GCD on the following three regexes: 
\begin{itemize}
  \item \texttt{(The [A-Za-z .,;:"?!()-]+.|It [A-Za-z .,;:"?!()-]+.)}
  \item \texttt{(The first [A-Za-z .,;:"?!()-]+.|It [A-Za-z .,;:"?!()-]+.)}
  \item \texttt{(The first rule of Fight Club is [A-Za-z .,;:"?!()-]+.|It [A-Za-z .,;:"?!()-]+.)}
\end{itemize}
Here, the LLM is forced to select between a sentence that starts with 
\texttt{The} and a sentence that starts with \texttt{It}, with sentences that start with \texttt{The} getting 
increasingly longer prefixes.
The idea is that the probability distribution between sentences that start with \texttt{The} and those that starts with \texttt{It} 
should be roughly equal to the following ratios:
\begin{itemize}
  \item Probability of a sentence starting with \texttt{The} : Probability of a sentence starting with \texttt{It}
  \item Probability of a sentence starting with \texttt{The first} : Probability of a sentence starting with \texttt{It}
  \item Probability of a sentence starting with \texttt{The first rule of fight club is} : Probability of a sentence starting with \texttt{It}
\end{itemize}
Following Llama3.1-8B, the ratios of these probabilities are roughly $10.10$, $0.11$, and $1.26 * 10^{-6}$.
\name, when generating 200 samples from each regex, generates a roughly matching distribution of sentences that start with 
\texttt{The} and \texttt{It}: 
169:30 (ratio $5.63$) for the first regex, 5:176 (ratio $0.03$) for the second, 
and 0:180 (ratio $0$) for the third (for the matching regexes).
The ratio of 
sentences starting with \texttt{The} \emph{decreases} as the prefix grows larger, following the distribution.
On the other hand, GPT2-Small-GCD fails to generate a distribution that adapts to the relative likelihood of the 
sentence prefixes: it generates sentences with a `Starting with \texttt{The}':`Starting with \texttt{It}' ratio of 
110:90
(ratio $1.2$) for the first regex, 
126:74
(ratio $1.7$) for the second, and 
112:88 (ratio $1.27$) for the third.
This illustrates how autoregressive constrained decoding \emph{distorts} the underlying distribution: 
the ratios remain largely similar, dominated by the probability of the first token in the sequence.


\end{document}